\documentclass[11pt,a4paper,logo]{googledeepmind}

\setleftlogo[120pt]{imgs/logo-removebg-preview-Photoroom.jpg} 
\setrightlogo[180pt]{}

\usepackage[T1]{fontenc}
\usepackage{pifont}
\usepackage[
    natbib=true,
    backend=biber,
    style=numeric, 
    sorting=none 
]{biblatex}
\addbibresource{main.bib} 
\AtEveryBibitem{\clearfield{month}}
\AtEveryBibitem{\clearfield{day}}
\usepackage{csquotes}

\newcommand{\sname}{DualResearch\xspace}

\title{\sname: Entropy-Gated Dual-Graph Retrieval for Answer Reconstruction}

\correspondingauthor{
$\spadesuit$: Co-first Author \quad
$\clubsuit$: Corresponding Author \\ 
Please send correspondence regarding this report to zhangbo@pjlab.org.cn
}

\author[1 2 $\spadesuit$]{Jinxin Shi}
\author[1 $\spadesuit$]{Zongsheng Cao}
\author[1]{Runmin Ma}
\author[1]{Yusong Hu}
\author[1 2]{Jie Zhou}
\author[1 2 $\clubsuit$]{Xin Li}
\author[1]{Lei Bai}
\author[1 2]{Liang He}
\author[1 $\clubsuit$]{Bo Zhang}

\affil[1]{Shanghai Artificial Intelligence Laboratory}
\affil[2]{East China Normal University}

\usepackage{pdflscape}

\usepackage{textcomp}
\usepackage{rotating}

\usepackage{setspace}
\usepackage{microtype} 

\usepackage{soul} 

\usepackage{graphicx}
\usepackage{subcaption}
\usepackage{caption}

\usepackage{booktabs}
\usepackage[table]{xcolor}
\sethlcolor{green!14}

\usepackage{array}
\usepackage{multirow}

\usepackage{amsmath}
\usepackage{siunitx}

\usepackage{enumitem}
\usepackage{float}
\usepackage{seqsplit}
\usepackage{framed}

\usepackage{tikz}
\usepackage{hyperref}
\usepackage{url}
\usepackage{listings}
\usepackage{xcolor}
\usepackage{soul}
\usepackage{colortbl}  
\usepackage{graphicx}
\usepackage{wrapfig}
\usepackage[most]{tcolorbox}
\usepackage{enumitem}
\usepackage{hyperref}

\tcbuselibrary{listingsutf8}
\definecolor{mycolor}{RGB}{50,80,150}

\setcounter{topnumber}{2}
\setcounter{bottomnumber}{2}
\setcounter{totalnumber}{4}

\usepackage{ragged2e}
\usepackage[most]{tcolorbox}

\usepackage{makecell}
\usepackage{adjustbox}

\usepackage[symbol]{footmisc}
\newcolumntype{Y}{>{\RaggedRight\arraybackslash}X}

\setlength{\parindent}{0pt}
\setlength{\parskip}{1ex} 
\newcolumntype{C}{>{\centering\arraybackslash}X} 
\newtheorem{definition}{Definition}
\newtheorem{theorem}{Theorem}
\usepackage{colortbl}
\usepackage{xspace}
\usepackage{pifont}
\usepackage{bbding}
\usepackage{multirow}
\usepackage{booktabs,tabularx,xcolor}
\usepackage{enumitem}
\usepackage{ulem}
\usepackage{tabularx}
\usepackage{array}
\usepackage{wrapfig}
\usepackage{booktabs,tabularx,array}
\newcolumntype{L}{>{\raggedright\arraybackslash}X}
\newcolumntype{Y}{>{\centering\arraybackslash}X}  
\newcolumntype{Z}{>{\raggedleft\arraybackslash}X}  
\usepackage{textcomp}

\usepackage{hyperref}
\hypersetup{
    colorlinks=true,
    linkcolor=blue, 
    citecolor=blue,  
    filecolor=black,
    urlcolor=blue    
}

\usepackage{tocloft}

\usepackage{etoolbox}
\usepackage{arydshln}
\usepackage{ulem} 
\makeatletter
\patchcmd{\@tocline}
    {\hfil}
    {\leaders\hbox{\hfil}\hfil}
    {}{}
\makeatother

\begin{document}
\sloppy

\begin{abstract}

The deep-research framework orchestrates external tools to perform complex, multi-step scientific reasoning that exceeds the native limits of a single large language model. However, it still suffers from context pollution, weak evidentiary support, and brittle execution paths.
To address these issues, we propose \textbf{DualResearch}, a retrieval and fusion framework that matches the epistemic structure of tool-intensive reasoning by jointly modeling two complementary graphs: a \textit{breadth semantic graph} that encodes stable background knowledge, and a \textit{depth causal graph} that captures execution provenance. Each graph has a layer-native relevance function, seed-anchored semantic diffusion for breadth, and causal–semantic path matching with reliability weighting for depth. To reconcile their heterogeneity and query-dependent uncertainty, DualResearch converts per-layer path evidence into answer distributions and fuses them in log space via an \textit{entropy-gated} rule with global calibration. The fusion up-weights the more certain channel and amplifies agreement.
As a complement to deep-research systems, DualResearch compresses lengthy multi-tool execution logs into a concise reasoning graph, and we show that it can reconstruct answers stably and effectively. On the scientific reasoning benchmarks HLE and GPQA, DualResearch achieves competitive performance. Using log files from the open-source system InternAgent, its accuracy improves by 7.7\% on HLE and 6.06\% on GPQA. 

\end{abstract}
\maketitle

\section{Introduction}
\begin{figure}[htbp] 
  \centering %
  \includegraphics[width=1\linewidth]{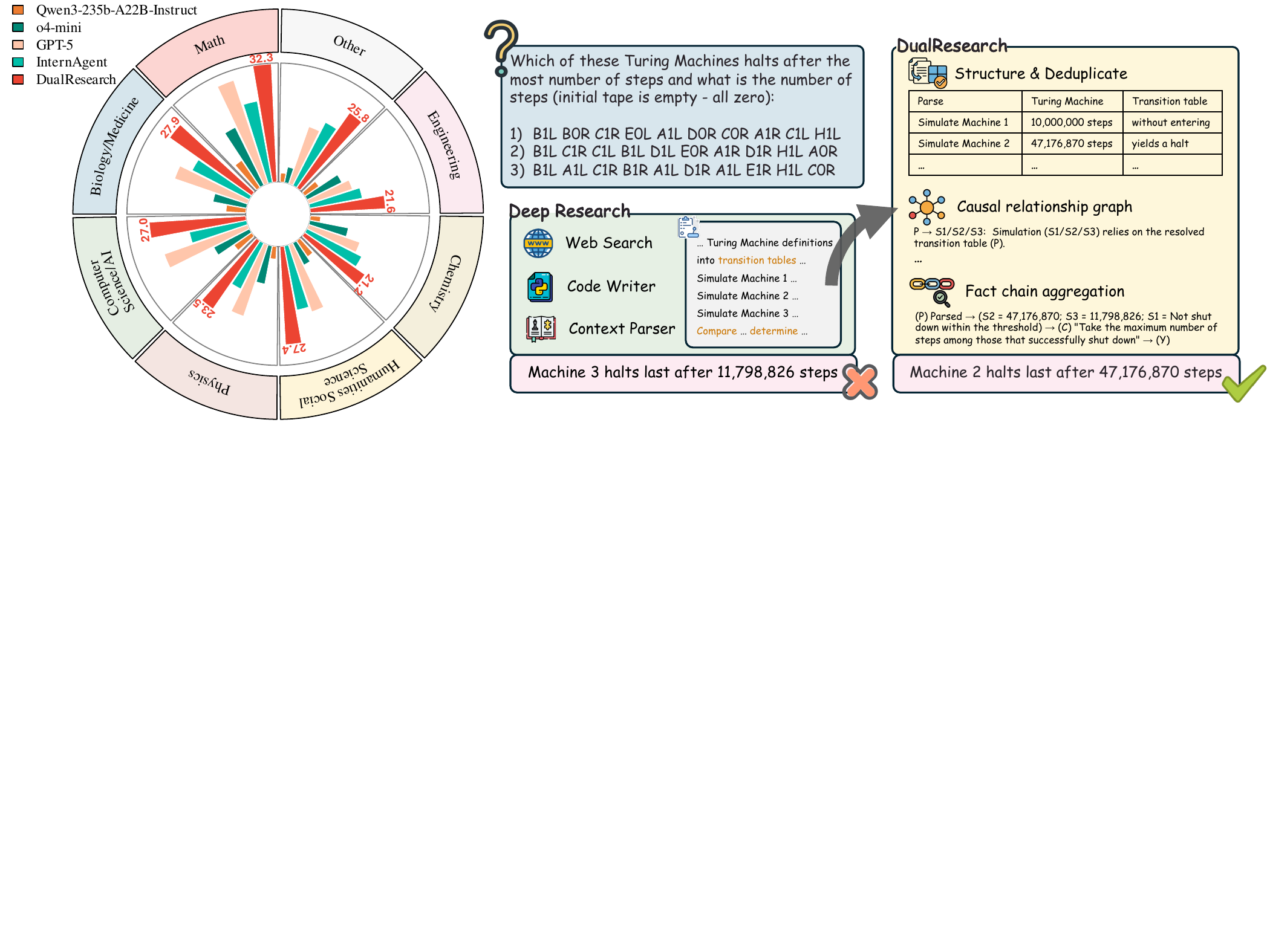} 
  \caption{The illustrations for DualResearch. Left: Performance comparison on the HLE benchmark, where DualResearch consistently outperforms strong baselines across diverse scientific domains. Right: A case on the “Turing Machine halting steps” problem, where deep research produces an incorrect conclusion due to noisy retrieval and missing causal constraints, while DualResearch leverages structured process graphs and entropy-gated aggregation to derive the correct answer.}\label{fig:1.1}
\end{figure}

Large language models (LLMs) have demonstrated remarkable capabilities and provided new paradigms for tackling scientific tasks across various domains~\citep{achiam2023gpt, zhang2025exploring}. However, current LLMs still lack explicit chains of evidence, systematic reasoning processes, and structured modes of knowledge organization in scientific reasoning applications~\citep{shojaee2025illusion}. As a result, their responses often fall short in terms of reliable theoretical grounding and logical rigor. Moreover, native models face challenges when integrating long-text information: they struggle with global planning~\citep{li2024longcontext}, cross-paragraph alignment~\citep{huang2023towards}, and consistency maintenance~\citep{ahmed2023better}. In other words, complex scientific tasks are often difficult to resolve through a single round of reasoning~\citep{zhang2025exploring}.

To address this, a set of approaches known collectively as \textit{deep-research} has been proposed~\citep{jones2025openai, hu2025owl, team2025novelseek}. These methods integrate LLMs with external information retrieval and tool usage, enabling models to acquire and incorporate external knowledge during reasoning. When necessary, they can decompose complex tasks through multi-agent collaboration, and attach explicit citations of evidence in their outputs, thereby enhancing, to some extent, their ability to solve challenging scientific problems. As illustrated in Figure \ref{fig:1.1}, when answering the question \textit{“Which of these Turing Machines halts after the most number of steps and what is the number of steps?”}, a typical deep-research workflow retrieves the transition tables of the relevant Turing machines, generates corresponding simulator code, and ultimately derives a conclusion.

Nevertheless, these methods still exhibit failure modes. First, noise introduced by semantic retrieval may mislead the simulation logic~\citep{shi2025mitigating}. Second, conclusions are often presented without an explicit demonstration of intermediate steps~\citep{prystawski2023think}. The root cause is that \textit{deep-research} operates as a tool-intensive paradigm. It requires broad semantic anchoring across concepts, aliases, and cross-literature evidence~\citep{huang2025deep}.

Based on this, we argue that retrieval and aggregation should reflect the epistemological structure of the task. To achieve this, we propose a new framework termed \textbf{DualResearch} constructing in parallel a \textit{Breadth Semantic Graph}, which organizes semantic and evidential connections among entities, paragraphs, and tables, and a \textit{Depth Causal Graph}, which encodes causal reasoning through typed actions, outputs, and verifiers. During reasoning, problem-solving advances through two collaborative stages: (1) breadth-oriented neighborhood expansion, where multi-hop semantic propagation around seed terms suppresses drift; (2) depth-oriented causal constraint analysis, where short and reliable execution chains are selected. 

To further unify the semantic and procedural knowledge, we introduce an entropy-gated dual-graph fusion. Each side first forms a normalized answer distribution, which we then fuse in log space with entropy weights, giving more weight to the sharper (more certain) distribution. This boosts agreement while avoiding overconfidence under joint uncertainty. The fused output preserves recall from similarity retrieval and produces stitchable chains of evidence for the LLM to leverage. With dual-graph modeling and entropy-gated fusion, this work advances scientific reasoning from \textit{Similarity-based Paragraph Matching} to \textit{Causality-based Verifiable Reasoning}, improving reliability, traceability, and reproducibility. \textit{\textbf{In summary, our contributions are as follow:}}
\begin{enumerate}
\vspace{-5pt}
\item To address the trade-off between semantic coverage and causal consistency in complex problem solving, we propose DualResearch. To our knowledge, this is the first framework that jointly models \textit{Breadth Semantic Graph} and \textit{Depth Causal Graph}, which assigns each Graph its own layer-native relevance function: seed-anchored semantic diffusion for breadth, and causal-semantic path matching with reliability weights for depth. 
\item
We propose an entropy-gated fusion mechanism that transforms evidence from both graphs into answer distributions. The mechanism reconciles two heterogeneous signals, undirected semantic neighborhoods and directed procedural paths, thereby ensuring robustness under channel disagreement and enhancing performance when the two channels align.
\item 
On graduate-level scientific datasets HLE and GPQA, our method successfully reused the log of the baseline and achieved superior performance. Moreover, when compared with state-of-the-art methods, DualResearch also demonstrated competitive results.
\end{enumerate}

\section{Related Work}
\noindent\textbf{Retrieval-Augmented Generation (RAG)} enriches LLM prompts with external evidence so that responses are grounded in factual sources~\citep{ram2023context, fan2024survey}. Standard RAG retrieves top-$k$ text chunks from a vector index, which is effective for short-hop fact lookup but fragments documents and offers no representation of the reasoning process~\citep{hyde, gao2023retrieval, chan2024rq, yu2024rankrag}. Recent graph-based RAG begins to link entities or claims~\citep{edge2024local}, yet most methods remain text-oriented and still lack an explicit channel for procedural information, limiting reproducibility and multi-step reasoning.
Beyond RAG, a complementary literature study examines how graphs interface with LLMs and agents. Three strands are prominent: (i) using graph neural network (GNNs) \citep{han2022vision} to produce topology-aware tokens for LLMs, \textit{e.g.}, GraphGPT~\citep{tang2024graphgpt} and LLaGA~\citep{chenllaga}; (ii) using LLMs to enrich graph content and provide supervision for downstream tasks, e.g., GALM~\citep{xie2023graph} and OFA~\citep{xie2023graph, liuone}; and (iii) building agents that directly operate on graphs to align GNN and LLM representations through interaction~\citep{li2023grenade, brannon2023congrat}.

In contrast,  standard RAG assumes a fixed corpus and falters when answers require dynamic tool use. Our approach treats graphs as execution objects: a breadth channel anchors terms across documents, while a depth channel retrieves short, auditable procedure chains produced during tool interaction. The result is not just content-grounded answers but tool-grounded and reproducible reasoning, addressing cases where chunk-based RAG is brittle.

\noindent\textbf{Deep-Research} motivates a series of systems for scientific problem solving. \citet{openai_deepresearch} and \citet{gemini_deepresearch} combine retrieval with reasoning to generate evidence-grounded reports from heterogeneous sources. Building on multi-agent coordination, OWL~\citep{hu2025owl} employs a hierarchical architecture, while InternAgent~\citep{team2025novelseek} extends this paradigm with closed-loop workflows for iterative hypothesis generation and experimentation. WebThinker~\citep{li2025webthinker} textitasizes dynamic web-based reasoning, integrating preference optimization for long-horizon inference. In contrast, single-agent approaches such as SFR-DR~\citep{nguyen2025sfr} train LLMs to select actions via reinforcement learning, whereas X-Masters~\citep{chai2025scimaster} advances ensemble reasoning through a multi-channel strategy. Together, these systems highlight the diversity of agent designs but also reveal open challenges in controlling solution space and ensuring reproducibility.

Despite these advances, large-scale retrieval and tool use expand the solution space and amplify uncertainty. Our approach instead leverages solving logs to distill declarative facts and procedural steps, thereby narrowing the solution space while improving transparency and reproducibility.

\section{Method: Breadth \& Depth Graphs with Layer-Native Retrieval Distances}
\begin{figure}[htbp] 
  \centering %
  \includegraphics[width=1\linewidth]{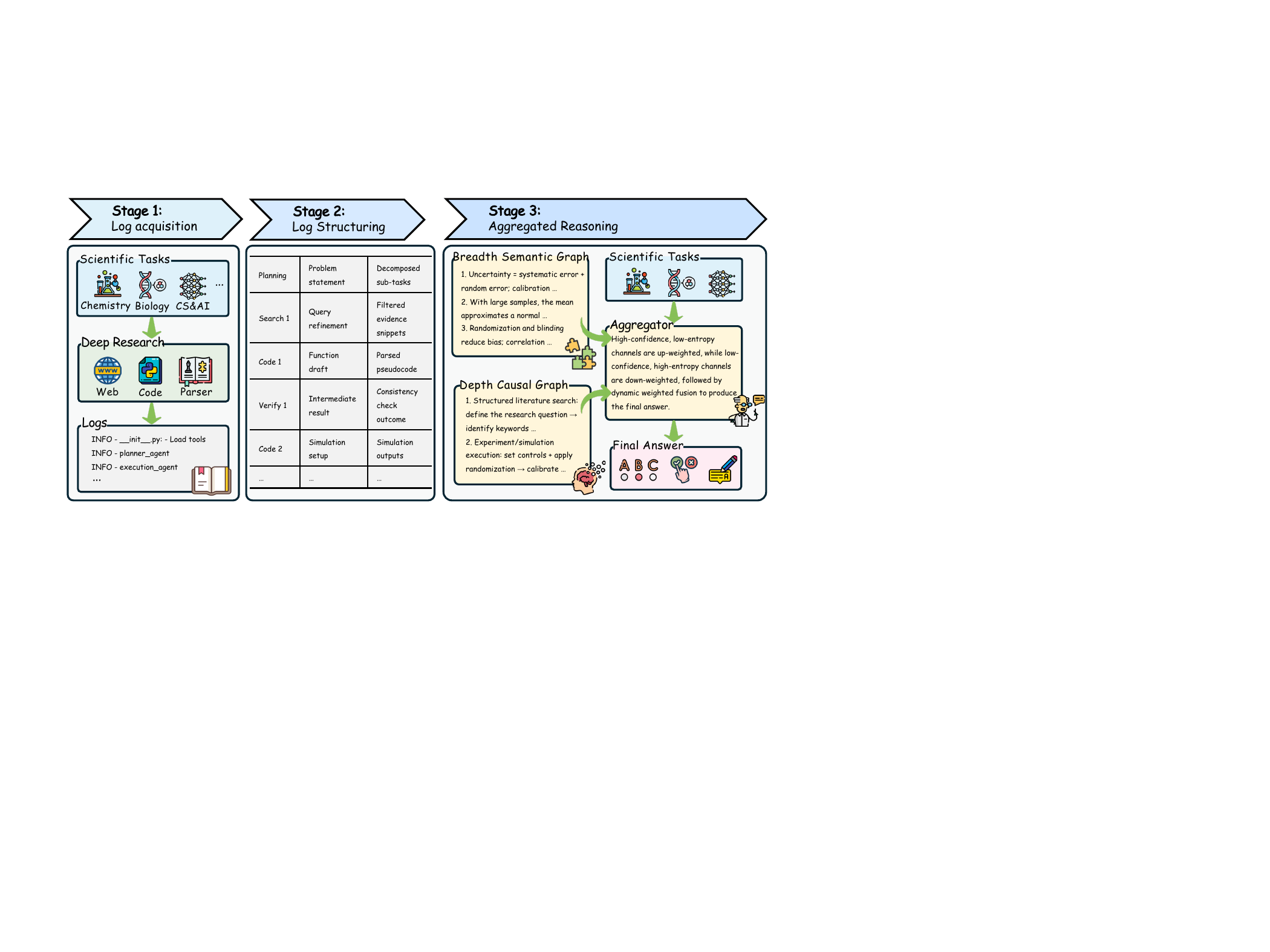} 
  \caption{Workflow of the DualResearch. Stage 1: scientific tasks are executed with Deep Research tools to produce raw logs. Stage 2: logs are structured into stepwise traces with intermediate artifacts. Stage 3: evidence is organized into a Breadth Semantic Graph and a Depth Causal Graph, whose outputs are fused by an entropy-gated aggregator to yield the final answer.} 
  \vspace{-10pt}
  \label{fig:3.1}
\end{figure}

\paragraph{Background.}
Tool-intensive questions usually need multi-hop reasoning before obtaining the answer. The model must ground terms and assumptions in background sources and then compose results across tools \textit{(search $\!\rightarrow\!$ parse $\!\rightarrow\!$ compute $\!\rightarrow\!$ verify)}. Text-only graphs capture background but not execution structure; raw logs record execution but lack stable anchors. We therefore use two complementary substrates and query each with a graph-native distance: a \textbf{breadth semantic graph} for broad, low-variance background and a \textbf{depth causal graph} for short, reproducible procedure chains. See Figure \ref{fig:3.1} for the overall workflow.

\subsection{Dual-Graph Collaboration}

\paragraph{Motivation.}
Scientific reasoning systems draw on two distinct sources: \textbf{(i) static background knowledge}, including entities, definitions, equations, and cross-document evidence that remain stable across queries and are typically derived from documents or websites; and \textbf{(ii) procedural knowledge}, consisting of transient, stepwise traces generated through \textit{search–parse–compute–verify} interactions, such as tool calls, intermediate artifacts, and validator outcomes. Both are essentia, the first anchors terms and reduces hallucination, and the second mirrors the thinking process and supports reasoning. However, most methods focus on static semantics and treat procedural signals as loose context, and this choice causes \textit{topical drift} (semantically related but operationally irrelevant passages) and \textit{irreproducible shortcuts} (answers without a replayable chain of steps). 

To overcome these limitations, we separate and subsequently integrate the two types of evidence. Static background knowledge is stored in a semantic substrate, while procedural traces are maintained in a dynamic, typed substrate. Each substrate is queried using a relevance function, and their posteriors are merged through an \textit{entropy-gated} fusion rule to generate concise, auditable rationales. We formalize these two substrates as a \textit{Breadth Semantic Graph} for static background knowledge and a \textit{Depth Dynamic Graph} for procedural provenance, and define their retrieval semantics.

\begin{definition}[Breadth Semantic Graph by Static Background] The Breadth graph is $G^{B}=(V^{B},E^{B},s_{B})$. Nodes $V^{B}$ are canonical entities/terms, paragraph or table spans, and formula symbols. Edges $E^{B}$ encode lightweight semantic/evidential relations (\texttt{mentions}, \texttt{defines}, \texttt{aliases}, \texttt{cites}, \texttt{supports}, \texttt{derived\_from}). Each edge $e$ carries a normalized confidence $s_{B}(e)\in(0,1]$ summarizing extraction reliability and cross-source support. This layer provides stable anchors and multi-hop background structure; it deliberately avoids procedural detail.
\end{definition}

\begin{definition}[Depth Causal Graph by Procedural Background]
The Depth graph is $G^{D}=(V^{D},E^{D},s_{D})$. Nodes $V^{D}$ abstract execution provenance into
\textit{Action} (operator/tool with parameters and an environment signature),
\textit{Artifact/Result} (intermediate or final quantities),
and \textit{Validator} (unit/equation/consistency checks).
Directed edges $E^{D}$ capture stepwise dependency observed in logs:
\texttt{consumes}$(\textit{Artifact}\!\rightarrow\!\textit{Action})$,
\texttt{produces}$(\textit{Action}\!\rightarrow\!\textit{Artifact})$,
\texttt{verified\_by}$(\textit{Artifact}\!\rightarrow\!\textit{Validator})$,
and carryover when an output becomes a downstream input.
An edge is admitted only if typing, units, and temporal order are coherent; admitted edges receive a single confidence $s_{D}(e)\in(0,1]$ derived from validator success and repeatability across retries/branches. This layer encodes reproducible, checkable chains rather than textual co-occurrence.
\end{definition}

\paragraph{Semantic Retrieval on the Two Graphs.}
Formally, $f(\cdot)$ serves as the encoder of query $q$, while $g_B(\cdot)$ and $g_D(\cdot)$ serve as the node and edge encoders for the Breadth and Depth graph, respectively. We define \textit{semantic} retrieval scores as follows.

\noindent\textbf{Breadth similarity (semantic anchoring).}
We score a background node $v\!\in\!V^{B}$ by comparing the query to a \textit{neighborhood–smoothed} representation of $v$, the node embedding lightly averaged with its immediate neighbors using edge confidences as weights. The breadth score is a single cosine:
\begin{equation}
\label{eq:SB-simple}
S_{B}(v\mid q)\;=\;\cos\!\big(f(q),\,\bar{g}_{B}(v)\big).
\end{equation}
Intuitively, $\bar{g}_{B}(v)$ suppresses spurious matches to isolated nodes and rewards terms that are semantically close to the query and supported by evidence. This one-hop smoothing avoids multi-step diffusion and hyperparameters, yet retains a topology-aware bias robust to noise extractions.

\noindent\textbf{Depth similarity (order- and type-aware).}
Cosine on a path embedding ignores ordering and typed constraints. We instead compare the \textit{operation sequence} implied by the query to that of a short admissible chain. Let $O(q)$ be the sequence of required typed operations extracted from $q$ (e.g., \texttt{search}→\texttt{parse}→\texttt{compute}→\texttt{verify}), and let $O(p)$ be the action/validator sequence on path $p$ (passing type/unit/time gates). Define $\,\mathrm{LCS}^{\dagger}(O(q),O(p))$ as the longest common subsequence that only counts matches with compatible types/units. With a simple path reliability $R(p)=\big(\prod_{e\in p}s_{D}(e)\big)^{\tau}$ ($\tau\!\in\!(0,1]$), we use:
\begin{equation}
\label{eq:SD-lcs}
S_{D}(t\mid q)\;=\;\max_{p\in \mathcal{P}_{\le L}(t)}\; R(p)\cdot \frac{\mathrm{LCS}^{\dagger}\!\big(O(q),O(p)\big)}{|O(q)|}.
\end{equation}
This single-score criterion is simple, auditable, and efficient (dynamic programming on short sequences). It favors targets supported by brief, reliable chains that \textit{respect the query’s procedural order and typing}, without relying on embedding cosines.

Equation~\ref{eq:SB-simple} provides a one–hop, neighborhood–smoothed \textit{semantic} score on the Breadth graph: it favors nodes that are textually close to the query while being supported by nearby evidence, yielding a stable, topology–aware anchor without multi-step diffusion. In contrast, Equation~\ref{eq:SD-lcs} supplies an \textit{order- and type-aware} process score on the Depth graph: a target is relevant only if there exists a short admissible chain whose action/validator sequence (and units/types) matches the query’s required operation pattern, with reliability encouraging brief, high-confidence procedures. Taken together, these complementary signals capture both \textit{where the facts live} and \textit{how the result is produced}, producing compact, auditable evidence that improves downstream reasoning.

\subsection{Dual-channel Entropy Aggregation}
Given a query $q$ and a finite answer set $\mathcal{A}$, our goal is to select $a^\star\in\mathcal{A}$ while returning a compact, checkable evidence chain. We have obtained the content after two graph retrievals, then we aggregate the information as follows.

\noindent\textbf{Path Scoring with Drift Control}
A path $p$ in $\mathcal{G}^{B}$ (obtained by path-constrained search) receives the log-additive score:
\begin{equation}
S_{B}(p\mid q)=\sum_{e\in p}\log w^{B}_{e}\;-\;\lambda_{\mathrm{off}}\cdot \mathrm{Offtopic}(p),
\label{eq:SB}
\end{equation}
where $\mathrm{Offtopic}(p)\!\ge\!0$ penalizes topical drift from $q$ accumulated along $p$ and $\lambda_{\mathrm{off}}\!\ge\!0$ controls the strength. 
An analogous score $S_{D}(p\mid q)$ is computed on $\mathcal{G}^{D}$, incorporating edge direction, type and temporal consistency.

\noindent\textbf{From Paths to Per–Channel Answer Distributions}
Let $\mathcal{P}_{B}(a)$ and $\mathcal{P}_{D}(a)$ denote the sets of breadth/causal paths that support answer $a$ (possibly filtered/verified by an LLM over their stitched contexts). 
We map path scores to per–channel answer distributions via log-sum-exp aggregation:
\begin{equation}
P_{B}(a\mid q)=\frac{\sum_{p\in\mathcal{P}_{B}(a)}\exp\!\big(S_{B}(p\mid q)\big)}{\sum_{a'\in\mathcal{A}}\sum_{p\in\mathcal{P}_{B}(a')}\exp\!\big(S_{B}(p\mid q)\big)},
P_{D}(a\mid q)=\frac{\sum_{p\in\mathcal{P}_{D}(a)}\exp\!\big(S_{D}(p\mid q)\big)}{\sum_{a'\in\mathcal{A}}\sum_{p\in\mathcal{P}_{D}(a')}\exp\!\big(S_{D}(p\mid q)\big)}.
\label{eq:PBPD}
\end{equation}

\noindent\textbf{Entropy–Driven Log–Linear Fusion}
We quantify each channel’s certainty using Shannon entropy,
\begin{equation}
H_{B}=-\sum_{a\in\mathcal{A}} P_{B}(a\mid q)\log P_{B}(a\mid q),\qquad
H_{D}=-\sum_{a\in\mathcal{A}} P_{D}(a\mid q)\log P_{D}(a\mid q),
\label{eq:entropy}
\end{equation}
and fuse the channels in log-space with a data–dependent gate:
\begin{align}
P(a\mid q)
&=\mathrm{softmax}\!\Big(\alpha(H)\cdot \log P_{D}(a\mid q) + \big(1-\alpha(H)\big)\cdot \log P_{B}(a\mid q)\Big),
\label{eq:fusion}\\[-1mm]
\alpha(H)
&=\frac{\exp(-H_{D})}{\exp(-H_{D})+\exp(-H_{B})}\in[0,1].
\label{eq:gate}
\end{align}
Intuitively, the more peaked (certain) channel receives the larger weight; diffuse (high-entropy) evidence is down-weighted. 
When both channels are confident and consistent, their signals are amplified by the fusion.

\noindent\textbf{Global Calibration}
We further calibrate the fused distribution to discourage overconfidence under global uncertainty:
\begin{equation}
\tilde{P}(a\mid q)=\mathrm{softmax}\!\Big(\tfrac{1}{\gamma}\log P(a\mid q)-\beta\cdot(H_{B}+H_{D})\Big),
\label{eq:calibration}
\end{equation}
with temperature $\gamma>0$ (smaller $\gamma$ sharpens) and penalty $\beta\ge 0$.

\noindent\textbf{Answer Selection and Minimal Evidence Chain}
The final prediction is the MAP answer
\begin{equation}
a^{\star}=\arg\max_{a\in\mathcal{A}} \tilde{P}(a\mid q).
\label{eq:argmax}
\end{equation}
To return a verifiable rationale, we extract a \textit{minimal evidence chain} on $\mathcal{G}^{D}\cup\mathcal{G}^{B}$ supporting $a^\star$.
Let $\Delta_{e}$ denote the marginal contribution of edge $e$ to $\tilde{P}(a^\star\mid q)$, measured as the drop in $\tilde{P}(a^\star\mid q)$ when removing $e$ (leave-one-out). 
We greedily prune edges in ascending $\Delta_e$ until the cumulative drop exceeds a threshold $\delta>0$, and report the remaining path.
Because $\mathcal{G}^{D}$ encodes directed, typed and temporal constraints while $\mathcal{G}^{B}$ provides cross-document coverage, the resulting chain is simultaneously \textit{deep} and \textit{broad}.

Eqs.~\ref{eq:fusion}--\ref{eq:calibration} make the decision rule \textit{evidence-adaptive}: a high-entropy channel cannot dominate the prediction, while agreement between confident channels is explicitly amplified.
The off-topic penalty in Eq.~\ref{eq:SB} curbs topical drift during breadth exploration.
The path-constrained extraction provides a compact, checkable rationale for $a^\star$, aligning the final output with graph-grounded evidence.
More theoretical analysis can be found in Appendix \ref{sec6}.

\vspace{-4pt}
\section{Experiment}
\label{sec4}
\vspace{-4pt}
In this section, the experimental setup is first introduced. Subsequently, the improvements of the proposed method over the baselines are demonstrated. Then, comparisons with existing methods, ablation studies on different components, and more targeted analyses are also presented.

\vspace{-5pt}
\subsection{Experimental Details}
\vspace{-2pt}
\label{sec4.1}

\begin{table*}[t]
\centering
\setlength{\tabcolsep}{2.5pt}
\footnotesize
\caption{Comparison on \textbf{HLE} and \textbf{GPQA}. We report per-subset accuracy for the baseline InternAgent and DualResearch under two settings. Improvements over the baselines are highlighted in red.}\label{tbl4.2}
\vspace{-5pt}
{\fontsize{8pt}{11pt}\selectfont
\begin{tabular}{cccccccccccc}
\toprule
\multicolumn{1}{c}{\multirow{2}{*}{Data Setting}} & \multicolumn{1}{c}{\multirow{2}{*}{Model}} & \multicolumn{1}{c}{\multirow{2}{*}{Method}} & \multicolumn{9}{c}{Accuracy in each subset (\%) $\uparrow$} \\ \cmidrule(lr){4-12}
\multicolumn{1}{c}{}        & \multicolumn{1}{c}{}           & \multicolumn{1}{c}{}   & Math & Bio/Med & CS/AI & Physics & Human. & Chem. & Engineer. & Other & Avg. \\ \midrule
\multirow{6}{*}{\begin{tabular}[c]{@{}c@{}}HLE\\ Text-Only\end{tabular}}  
                            & \multirow{3}{*}{\begin{tabular}[c]{@{}c@{}}Qwen3-235B\\ -A22B-Instruct\end{tabular}}    
                                                             & InternAgent            & 13.5 & 15.3    & 11.6  & 6.9     & 16.1   & 8.9   & 15.6      & 17.6  & 13.3 \\
                            &                                & DualResearch           & 16.9 & 18.9    & 13.9  & 10.9    & 19.7   & 21.8  & 21.9      & 19.9  & 17.1 \\
                            &                                & \cellcolor{mycolor!10}Improvement            
                                                             & \cellcolor{mycolor!10}{\textcolor{red!70!black}{\textuparrow \textbf{3.4}}} 
                                                             & \cellcolor{mycolor!10}{\textcolor{red!70!black}{\textuparrow \textbf{3.6}}} 
                                                             & \cellcolor{mycolor!10}{\textcolor{red!70!black}{\textuparrow \textbf{2.3}}} 
                                                             & \cellcolor{mycolor!10}{\textcolor{red!70!black}{\textuparrow \textbf{4.0}}} 
                                                             & \cellcolor{mycolor!10}{\textcolor{red!70!black}{\textuparrow \textbf{3.6}}} 
                                                             & \cellcolor{mycolor!10}{\textcolor{red!70!black}{\textuparrow \textbf{12.9}}} 
                                                             & \cellcolor{mycolor!10}{\textcolor{red!70!black}{\textuparrow \textbf{6.3}}} 
                                                             & \cellcolor{mycolor!10}{\textcolor{red!70!black}{\textuparrow \textbf{2.3}}} 
                                                             & \cellcolor{mycolor!10}{\textcolor{red!70!black}{\textuparrow \textbf{3.8}}} \\ \cmidrule(lr){2-12}
                            & \multirow{3}{*}{o4-mini}       & InternAgent            & 23.5 & 18.9    & 13.9  & 17.3    & 21.6   & 21.6  & 18.8      & 25.0  & 21.3 \\
                            &                                & DualResearch           & 31.3 & 27.0    & 26.8  & 24.3    & 29.0   & 25.7  & 28.1      & 30.1  & 29.0 \\
                            &                                & \cellcolor{mycolor!10}Improvement            
                                                             & \cellcolor{mycolor!10}{\textcolor{red!70!black}{\textuparrow \textbf{7.8}}} 
                                                             & \cellcolor{mycolor!10}{\textcolor{red!70!black}{\textuparrow \textbf{8.1}}} 
                                                             & \cellcolor{mycolor!10}{\textcolor{red!70!black}{\textuparrow \textbf{12.9}}} 
                                                             & \cellcolor{mycolor!10}{\textcolor{red!70!black}{\textuparrow \textbf{7.0}}} 
                                                             & \cellcolor{mycolor!10}{\textcolor{red!70!black}{\textuparrow \textbf{7.4}}} 
                                                             & \cellcolor{mycolor!10}{\textcolor{red!70!black}{\textuparrow \textbf{4.1}}} 
                                                             & \cellcolor{mycolor!10}{\textcolor{red!70!black}{\textuparrow \textbf{9.3}}} 
                                                             & \cellcolor{mycolor!10}{\textcolor{red!70!black}{\textuparrow \textbf{5.1}}} 
                                                             & \cellcolor{mycolor!10}{\textcolor{red!70!black}{\textuparrow \textbf{7.7}}} \\ \midrule
\multirow{6}{*}{\begin{tabular}[c]{@{}c@{}}HLE\\ Text-Only\end{tabular}}      
                            & \multirow{3}{*}{\begin{tabular}[c]{@{}c@{}}Qwen3-235B\\ -A22B-Instruct\end{tabular}}        
                                                             & InternAgent            & 13.0 & 12.5    & 10.8  & 7.4     & 14.2   & 7.9   & 9.0       & 13.7  & 11.9 \\
                            &                                & DualResearch           & 16.2 & 15.0    & 12.8  & 9.6     & 17.4   & 13.3  & 12.6      & 15.0  & 14.8 \\
                            &                                & \cellcolor{mycolor!10}Improvement            
                                                             & \cellcolor{mycolor!10}{\textcolor{red!70!black}{\textuparrow \textbf{3.2}}} 
                                                             & \cellcolor{mycolor!10}{\textcolor{red!70!black}{\textuparrow \textbf{2.5}}} 
                                                             & \cellcolor{mycolor!10}{\textcolor{red!70!black}{\textuparrow \textbf{2.0}}} 
                                                             & \cellcolor{mycolor!10}{\textcolor{red!70!black}{\textuparrow \textbf{2.2}}} 
                                                             & \cellcolor{mycolor!10}{\textcolor{red!70!black}{\textuparrow \textbf{3.2}}} 
                                                             & \cellcolor{mycolor!10}{\textcolor{red!70!black}{\textuparrow \textbf{5.4}}} 
                                                             & \cellcolor{mycolor!10}{\textcolor{red!70!black}{\textuparrow \textbf{3.6}}} 
                                                             & \cellcolor{mycolor!10}{\textcolor{red!70!black}{\textuparrow \textbf{1.3}}} 
                                                             & \cellcolor{mycolor!10}{\textcolor{red!70!black}{\textuparrow \textbf{2.9}}} \\ \cmidrule(lr){2-12} 
                            & \multirow{3}{*}{o4-mini}       & InternAgent            & 23.5 & 18.9    & 17.4  & 15.7    & 19.2   & 18.2  & 16.2      & 20.6  & 20.1 \\
                            &                                & DualResearch           & 32.3 & 27.9    & 27.0  & 23.5    & 27.4   & 21.2  & 21.6      & 25.8  & 27.8 \\
                            &                                & \cellcolor{mycolor!10}Improvement            
                                                             & \cellcolor{mycolor!10}{\textcolor{red!70!black}{\textuparrow \textbf{8.8}}} 
                                                             & \cellcolor{mycolor!10}{\textcolor{red!70!black}{\textuparrow \textbf{9.0}}} 
                                                             & \cellcolor{mycolor!10}{\textcolor{red!70!black}{\textuparrow \textbf{9.6}}} 
                                                             & \cellcolor{mycolor!10}{\textcolor{red!70!black}{\textuparrow \textbf{7.8}}} 
                                                             & \cellcolor{mycolor!10}{\textcolor{red!70!black}{\textuparrow \textbf{8.2}}} 
                                                             & \cellcolor{mycolor!10}{\textcolor{red!70!black}{\textuparrow \textbf{3.0}}} 
                                                             & \cellcolor{mycolor!10}{\textcolor{red!70!black}{\textuparrow \textbf{5.4}}} 
                                                             & \cellcolor{mycolor!10}{\textcolor{red!70!black}{\textuparrow \textbf{5.2}}} 
                                                             & \cellcolor{mycolor!10}{\textcolor{red!70!black}{\textuparrow \textbf{7.7}}} \\\midrule
\multirow{3}{*}{{\begin{tabular}[c]{@{}c@{}}GPQA\\ -diamond\end{tabular}}}       
                            & \multirow{3}{*}{o4-mini}       & InternAgent            & -    & 73.68   & -    & 94.19   & -      & -     & 70.97      & -     & 81.31    \\
                            &                                & DualResearch           & -    & 84.21   & -    & 96.51   & -      & -     & 79.57      & -     & 87.37    \\
                            &                                & \cellcolor{mycolor!10}Improvement            
                                                             & \cellcolor{mycolor!10}{-} 
                                                             & \cellcolor{mycolor!10}{\textcolor{red!70!black}{\textuparrow \textbf{10.53}}}  
                                                             & \cellcolor{mycolor!10}{-}     
                                                             & \cellcolor{mycolor!10}{\textcolor{red!70!black}{\textuparrow \textbf{2.32}}} 
                                                             & \cellcolor{mycolor!10}{-}    
                                                             & \cellcolor{mycolor!10}{-}    
                                                             & \cellcolor{mycolor!10}{\textcolor{red!70!black}{\textuparrow \textbf{8.60}}}       
                                                             & \cellcolor{mycolor!10}{-}    
                                                             & \cellcolor{mycolor!10}{\textcolor{red!70!black}{\textuparrow \textbf{6.06}}}    \\\bottomrule
\end{tabular}}
\vspace{-13pt}
\end{table*}
\noindent\textbf{Baseline and Setting.}
In this study, we adopt InternAgent~\citep{team2025novelseek} as the baseline. And we collected log files generated during InternAgent problem-solving process and subsequently cleaned them, thereby providing the foundation for graph construction.
The results of QwQ-32B, DeepSeek-R1-671B, and WebThinker-32B-RL are obtained from~\citet{li2025webthinker}. The results of SFR-DR-20B~\citep{nguyen2025sfr} and X-Masters~\citep{chai2025scimaster} are drawn from their papers. Results for OpenAI Deep Research~\citep{openai_deepresearch} and Gemini Deep Research~\citep{gemini_deepresearch} are taken from their respective technical reports. For Qwen3-235B-A22B-Instruct~\citep{yang2025qwen3}, GPT-5~\citep{OpenAI_Introducing_GPT5_for_developers_2025}, and o4-mini~\citep{OpenAI_o3_o4mini_SystemCard_2025}, we conducted direct evaluations through the API (see Appendix \ref{sec7.2} for prompt settings), with the temperature set to 0.0 and no restriction on the maximum token limit.

\noindent\textbf{Benchmark.}
\textbf{Google-Proof Q\&A (GPQA)}~\citep{rein2024gpqa} is a graduate-level benchmark of 448 expert-written multiple-choice questions in biology, chemistry, and physics, designed to test advanced scientific reasoning. We adopt its GPQA-Diamond subset (198 questions), which was curated to include only items unanimously agreed upon by domain experts but often misanswered by non-experts, ensuring both reliability and difficulty.
\textbf{Humanity’s Last Exam (HLE) }~\citep{phan2025humanity} is a multimodal benchmark of 2,500 expert-curated, closed-form questions across eight domains. It assesses advanced reasoning through multiple-choice and exact-match tasks, comprising 2,158 text-only and 342 text–image items, thereby enabling rigorous evaluation across modalities.

\subsection{Comparison with Baseline Method}
\label{sec4.2}
We quantify the improvements of DualResearch over InternAgent across datasets and backbones. As shown in Table \ref{tbl4.2}, \textbf{On the HLE Text-Only}, DualResearch demonstrates significant improvements over the baseline InternAgent across two different models. Specifically, it achieves a 12.9\% increase in Chemistry with Qwen3 and in CS/AI with o4-mini, along with average accuracy gains of 3.8\% and 7.7\%, respectively. Similar improvements remain evident \textbf{on the HLE All-Set}, where average accuracy increases by 2.9\% and 7.7\%. \textbf{On GPQA}, using o4-mini as the backbone model, the largest improvement is observed in Biology, with an increase of 10.53\%. The overall average accuracy also rises by 6.06\%. These stable improvements substantiate that, after reusing InternAgent’s solution logs, DualResearch consistently amplifies effective evidence while suppressing irrelevant information, leading to sustained and reproducible performance gains.

In addition, we reproduced the results of X-Masters on Bio/Med within HLE Text-Only. Based on its logs, DualResearch achieving an improvement of 4.9\% (see Appendix \ref{sec7.1} for details).

\subsection{Comparison between DualResearch and Existing Work}
\label{sec4.3}

\begin{table}[t]
\caption{Comparison on \textbf{HLE}. The best results are \textbf{bolded}, and the second-best results are \underline{underlined}. Methods marked $\dag$ are direct model evaluations, those marked $\ddag$ are agent-based. Here, “Qwen3-235B” denotes “Qwen3-235B-A22B-Instruct.”}\label{tbl4.3.1}
\vspace{-5pt}
\centering
\setlength\tabcolsep{2.5pt}
{\fontsize{8pt}{11pt}\selectfont
\begin{tabular}{clccccccccc}
\toprule
\multicolumn{2}{c}{}                                           & \multicolumn{9}{c}{Accuracy in each subset (\%)$~\uparrow$}                                                                                                                      \\ \cmidrule{3-11} 
\multicolumn{2}{c}{\multirow{-2}{*}{Method}}                   & Math             & Bio/Med             & CS/AI             & Physics             & Human.             & Chem.             & Engineer.             & Other             & Avg.  \\ \midrule
                            & QwQ-32B $\dag$                   & 12.6             & 14.0                & 7.9               & 4.0                 & 6.0                & 13.3              & 5.3                   & 4.4               & 9.6  \\
                            & DeepSeek-R1-671B $\dag$          & 9.3              & 8.6                 & 7.4               & 5.8                 & 11.0               & 5.6               & 10.3                  & 7.5               & 8.6  \\
                            & Qwen3-235B $\dag$                & 11.4             & 9.0                 & 8.5               & 5.5                 & 8.3                & 4.9               & 7.8                   & 6.3               & 9.2  \\
                            & o4-mini $\dag$                   & 19.7             & 9.9                 & 13.4              & 13.4                & 9.8                & 6.9               & 9.4                   & 6.8               & 14.5 \\
                            & GPT-5 $\dag$                     & \textbf{31.3}    & 21.2                & \underline{25.5}  & \underline{23.3}    & 21.8               & 18.8              & 10.9                  & 19.3              & 25.9 \\
                            & WebThinker-32B-RL $\ddag$        & 16.7             & 25.6                & 2.0               & 12.7                & 18.0               & \textbf{26.7}     & 15.8                  & 15.6              & 15.8 \\
                            & SFR-DR-20B $\ddag$               & -                & -                   & -                 & -                   & -                  & -                 & -                     & -                 & 28.7 \\
                            & X-Masters $\ddag$                & -                & \textbf{27.6}       & -                 & -                   & -                  & -                 & -                     & -                 & \textbf{32.1} \\
                            & InternAgent (Qwen3-235B) $\ddag$ & 13.5             & 15.3                & 11.6              & 6.9                 & 16.1               & 8.9               & 15.6                  & 17.6              & 13.3 \\
                            & InternAgent (o4-mini) $\ddag$    & 23.5             & 18.9                & 13.9              & 17.3                & 21.6               & 21.6              & 18.8                  & 25.0              & 21.3 \\
\rowcolor{mycolor!10}       & DualResearch (Qwen3-235B $\ddag$)& 16.9             & 18.9                & 13.9              & 10.9                & 19.7               & 21.8              & 21.9                  & 19.9              & 17.1 \\
\rowcolor{mycolor!10}\multirow{-9}{*}{Text-Only}       
                            & DualResearch (o4-mini) $\ddag$   & \textbf{31.3}    & \underline{27.0 }   & \textbf{26.8}     & \textbf{24.3}       & \textbf{29.0}      & \underline{25.7}  & \underline{28.1}      & \textbf{30.1}     & \underline{29.0} \\ \midrule
                            & Qwen3-235B $\dag$                & 11.1             & 7.9                 & 8.3               & 6.1                 & 7.8                & 5.5               & 7.2                   & 5.2               & 8.6  \\
                            & o4-mini $\dag$                   & 19.0             & 11.4                & 12.9              & 12.6                & 9.1                & 12.7              & 12.6                  & 6.9               & 14.3 \\
                            & GPT-5 $\dag$                     & \underline{31.0} & 22.1                & \underline{24.9}  & \underline{21.7}    & 20.6               & 16.4              & 14.4                  & 18.0              & 24.8 \\
                            & OpenAI Deep Research $\ddag$     & -                & -                   & -                 & -                   & -                  & -                 & -                     & -                 & 26.6 \\
                            & Gemini Deep Research $\ddag$     & -                & -                   & -                 & -                   & -                  & -                 & -                     & -                 & \underline{26.9} \\
                            & InternAgent (Qwen3-235B) $\ddag$ & 13.0             & 12.5                & 10.8              & 7.4                 & 14.2               & 7.9               & 9.0                   & 13.7              & 11.9 \\
                            & InternAgent (o4-mini) $\ddag$    & 23.5             & 18.9                & 17.4              & 15.7                & 19.2               & 18.2              & 16.2                  & 20.6              & 20.1 \\
\rowcolor{mycolor!10}       & DualResearch (Qwen3-235B $\ddag$)& 16.2             & 15.0                & 12.8              & 9.6                 & 17.4               & 13.3              & 12.6                  & 15.0              & 14.8 \\
\rowcolor{mycolor!10}\multirow{-10}{*}{All-Set}        
                            & DualResearch (o4-mini) $\ddag$     & \textbf{32.3}    & \textbf{27.9}       & \textbf{27.0}     & \textbf{23.5}       & \textbf{27.4}      & \textbf{21.2}     & \underline{21.6}      & \textbf{25.8}     & \textbf{27.8} \\ \bottomrule
\end{tabular}}
\vspace{-10pt}
\end{table}

In this section, we compare the proposed method with existing approaches. As illustrated in Table \ref{tbl4.3.1}, DualResearch consistently achieved either the highest or second-highest average accuracy in both cases. On the HLE Text-Only task, DualResearch obtained an accuracy of 29.0\%, ranking second to X-Masters at 32.1\%. Notably, X-Masters employs a multi-channel strategy to enhance diversity and aggregates predictions through majority voting, which substantially increases token usage in this setting. At the disciplinary level, our method ranked first in five of the eight fields, while achieving second-best performance in the Bio/Med, Chem and Engineer.

In the All-Set setting, the average accuracy was 27.8\%, outperforming the second-best, Gemini Deep Research, by 0.9\%. Here, it ranked first in seven out of eight fields, placing second in Engineering. 
\begin{wraptable}{r}{0.48\textwidth}
\vspace{-6pt}
\captionof{table}{Comparison on \textbf{GPQA}. The best results are \textbf{bolded}, and the second-best results are \underline{underlined}. Methods marked $\dag$ are direct model evaluations, those marked $\ddag$ are agent-based.}\label{tbl4.3.2}
\vspace{-4pt}
\centering
\setlength\tabcolsep{4pt}
{\fontsize{8pt}{11pt}\selectfont
\begin{tabular}{lcccc}
\toprule
\multirow{2}{*}{Method}                              & \multicolumn{4}{c}{Accuracy in each subset (\%)$~\uparrow$} \\ \cmidrule{2-5} 
                                                     & Bio               & Chem               & Phys               & Avg.           \\ \midrule
DeepSeek-R1-671B $\dag$                                     & 63.16             & \underline{76.34}  & 91.86              & 82.32         \\
o4-mini $\dag$                                              & 78.95             & 63.44              & 94.19              & 78.28          \\
GPT-5 $\dag$                                                & \textbf{84.21}    & \underline{76.34}  & \underline{95.35}  & \underline{85.35}         \\
WebThinker-32B-RL $\ddag$                            & 78.90             & 50.50              & 90.70              & 70.70         \\
InternAgent (o4-mini) $\ddag$                        & 73.68             & 70.97              & 94.19              & 81.31         \\
\rowcolor{mycolor!10}DualResearch (o4-mini) $\ddag$    & \textbf{84.21}    & \textbf{79.57}     & \textbf{96.51}     & \textbf{87.37}      \\ \bottomrule
\end{tabular}}
\end{wraptable}
Compared with agent baselines using the same backbone model, our method consistently demonstrated significant improvements. On o4-mini, DualResearch outperformed InternAgent by 7.3\% in the Text-Only setting and by 7.3\% in the All-Set setting. Table \ref{tbl4.3.2} shows comparison on GPQA, DualResearch likewise exhibited superior performance, achieving an average accuracy of 87.37\%, which is 2.02\% higher than direct inference with GPT-5. By subset, on Bio, the result tied with the best; on Chem, the improvement was most pronounced at 3.23\%; and on Phys, it further led by 1.16\%. In addition, on the same backbone model o4-mini, our average accuracy is 6.06\% higher than that of InternAgent. Overall, our method distills the logs of DeepResearch into declarative and procedural knowledge. This reduces the solution space, decreases uncertainty, and ultimately delivers more stable improvements.

\subsection{Ablation Study}
\label{sec4.4}
\begin{wraptable}{r}{0.4\textwidth}
\vspace{-13pt}
\captionof{table}{Ablation of components with o4-mini. The best results are \textbf{bolded}.}\label{tbl4.4}
\centering
\vspace{-5pt}
\setlength\tabcolsep{2.5pt}
{\fontsize{8pt}{11pt}\selectfont
\begin{tabular}{ccccc}
\toprule
\multicolumn{3}{c}{Components}            & \multicolumn{2}{c}{Datasets}    \\ \midrule
Breadth     & Depth       & Aggregation   & HLE              & GPQA         \\ \midrule
\ding{56}   & \ding{56}   & \ding{56}     & 20.1             & 81.31        \\
\ding{52}   & \ding{56}   & \ding{56}     & 23.6             & 81.31        \\
\ding{56}   & \ding{52}   & \ding{56}     & 23.5             & 82.83        \\
\ding{52}   & \ding{52}   & \ding{56}     & 21.9             & 80.80        \\
\ding{52}   & \ding{52}   & \ding{52}     & \textbf{27.8}    & \textbf{87.37}        \\ \bottomrule
\end{tabular}}
\vspace{-10pt}
\end{wraptable}
In this section, we conduct an ablation study on the key components based on o4-mini to verify the contribution and complementarity of each submodule. Table \ref{tbl4.4} reports the results on HLE and GPQA. First, it can be observed that single-source information already brings consistent gains: on HLE, Breadth and Depth improve performance by 3.5\% and 3.4\%, respectively; on GPQA, Depth yields an improvement of 1.52\%. Second, directly concatenating Breadth and Depth introduces conflicts and noise, resulting in only a 1.8\% improvement over the HLE baseline and even a 0.51\% decrease on GPQA. This indicates that unconstrained fusion of the two types of evidence enlarges the solution space and amplifies uncertainty. Finally, with the introduction of entropy-based aggregation, the model achieves the best results, reaching 27.8\% on HLE and 87.37\% on GPQA. This aligns with our original design motivation: by hierarchically modeling declarative and procedural evidence from logs, and employing entropy-driven aggregation to adaptively select high-confidence and low-redundancy information, we avoid the expansion of the search space caused by naive concatenation, thereby improving performance on both benchmarks simultaneously.

\subsection{Analysis \& Visualization}
\label{sec5.5}
\noindent\textbf{Case Study.}
\begin{figure}[t]
    \centering
    \includegraphics[width=0.9\linewidth]{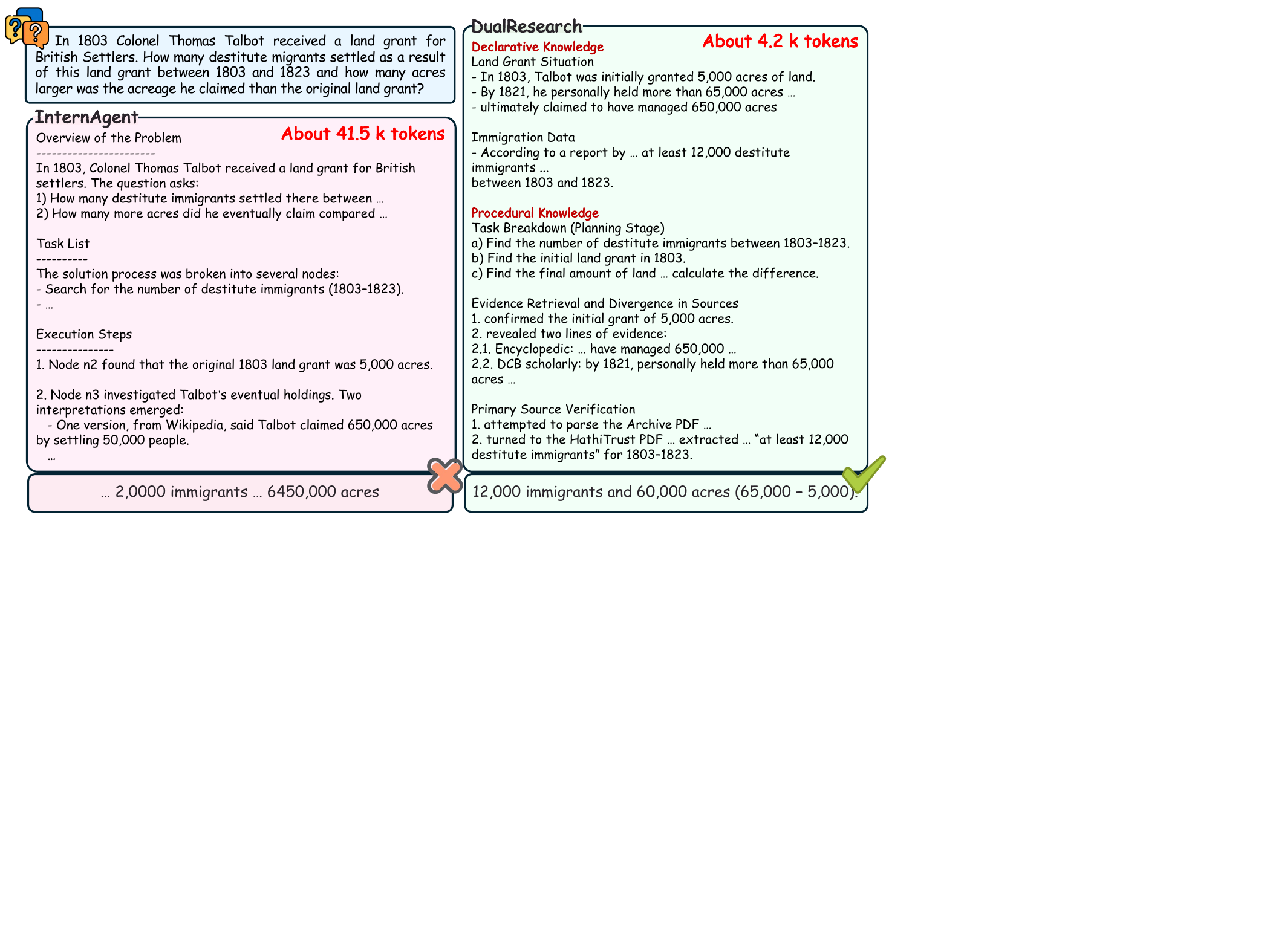}
    \vspace{-5pt}
    \caption{Case study of a historical query comparing InternAgent and DualResearch.}
    \label{fig:5.4.1}
    \vspace{-12pt}
\end{figure}
Figure \ref{fig:5.4.1} highlights the differences between InternAgent and DualResearch in evidence management and constraint modeling. InternAgent generated a 41.5k-token log that repeatedly conflated contradictory cues, for example, equating “claimed” with “managed” and misinterpreting “at least 12,000” as a larger value. The long record accumulated noise rather than certainty.

In contrast, DualResearch distilled the same log into structured evidence: (1) declarative facts, such as the grant of 5,000 acres, 12,000 immigrants (1803–1823), and over 65,000 acres personally held by 1821; and (2) procedural steps, including constraining time, normalizing units, and restricting “claimed” to personal holdings. It then applied entropy-gated selection, down-weighting the 650,000-acre claim as uncertain and retaining the 65,000-acre figure as reliable. With a compact 4.2k-token graph, it produced the correct result: 12,000 immigrants and a 60,000-acre difference. These findings show that layered representations and uncertainty-driven evidence selection improve both interpretive consistency and computational reliability.

\noindent\textbf{Signal and Subject Graph Construction Strategies.}
Figure \ref{fig:5.4.2} shows that subject-level multi-sample graph aggregation outperforms the Signal baseline in Bio/Med, Chemistry, Engineering, Physics, and Mathematics, benefiting from strong ontological consistency and standardized notation. By reusing entities and relations across samples, aggregation yields denser graphs that enhance long-chain reasoning efficiency and verification accuracy. Conversely, in domains with greater heterogeneity, such as Other, Humanities, and CS/AI, Signal’s single-sample graphs prove more effective, as direct merging may introduce contradictory or weakly related edges, amplifying noise and undermining causal coherence. The small gap indicates DualResearch adapts, strengthening reusable edges in coherent domains and suppressing weak or conflicting ones in heterogeneous settings, balancing overall gains with domain-specific contrasts.
\begin{figure}[h]
    \centering
    \includegraphics[width=0.85\linewidth]{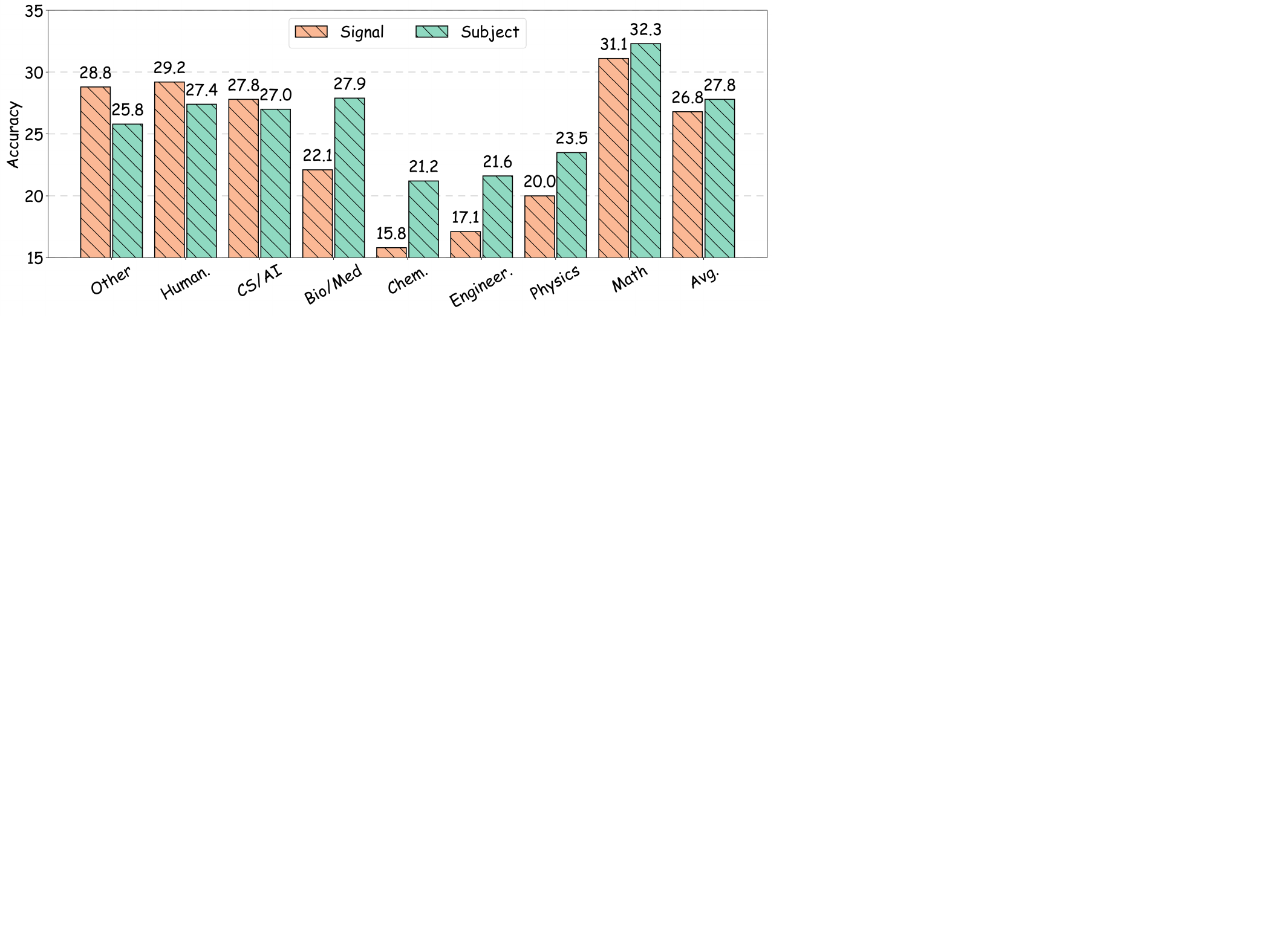}
    \vspace{-5pt}
    \caption{Accuracy comparison between two graph construction strategies across disciplines. Signal denotes single-sample graph construction, where each instance independently forms a knowledge graph. Subject denotes subject-level multi-sample graph aggregation, where multiple instances within the same discipline are merged into a unified graph.}
    \label{fig:5.4.2}
    \vspace{-15pt}
\end{figure}
\vspace{-2pt}
\section{Conclusion}
\vspace{-2pt}

In this paper, we have introduced DualResearch, a dual-graph retrieval and fusion framework designed for tool-intensive scientific reasoning tasks. The method constructs both a breadth knowledge graph and a depth process graph: the former provides stable background knowledge, while the latter captures executable reasoning processes. 
In extensive experiments, DualResearch can directly leverage the problem-solving trajectories of existing deep research frameworks to yield significant improvements in both accuracy and verifiability. On scientific benchmarks such as HLE and GPQA, our approach substantially outperforms strong baselines and maintains competitiveness across multidimensional evaluation metrics.

In the future, we plan to extend DualResearch to multimodal scientific reasoning by integrating diverse sources of evidence such as figures, tables, and experimental data. We believe that this direction will further advance the breadth and depth of automated scientific discovery.

\clearpage
\section{Ethics Statement}
This article does not involve research on human subjects, practices of dataset release, insights, methods, or applications with potential harm, or other related ethical issues.

\section{Reproducibility Statement}
We have made every effort to ensure the reproducibility of our research results. The implementation details of the proposed DualResearch framework, including graph construction rules and entropy-gated aggregation, are described in the main text and further elaborated in the appendix. For theoretical contributions, all assumptions are explicitly stated, and complete proofs of the related claims are provided in the appendix. For empirical evaluations, we used publicly available datasets, with the processing pipeline and experimental settings documented in the supplementary materials. All relevant code for this study will be released on GitHub upon acceptance of the paper.

\begingroup
\sloppy
\normalem
\printbibliography[heading=bibintoc]
\endgroup

\clearpage
\appendix
\clearpage

\section{Theory}
\label{sec6}

\subsection{Rationale and Scope}
\label{sec6.1}
Complex, tool-intensive queries expose two complementary information channels: a \textit{breadth} channel that aggregates stable background evidence across documents, and a \textit{depth} channel that captures instance-specific procedural traces (tools, intermediates, validations). These channels differ systematically in structure (undirected relatedness vs.\ directed, typed causality) and uncertainty (low-variance coverage vs.\ path-dependent reliability). In practice, which channel is preferable is \textit{instance-dependent}: breadth can dominate when background suffices, while depth is decisive when procedural constraints determine correctness. Collapsing both into a single latent space fixes an inductive bias that cannot adapt per query and invites ad-hoc fusion without guarantees.

We therefore cast retrieval as a \textit{mixture-of-experts} problem under a proper scoring rule (log-loss), where breadth and depth provide probabilistic posteriors and a gate arbitrates their contributions. Our gate is \textit{uncertainty-aware} via Shannon entropy, a choice that is both operational and theoretically convenient: (i) geometric (log-linear) mixtures admit pointwise upper bounds by convex combinations of expert losses; and (ii) under mild, testable entropy–loss calibration, lower entropy predicts lower conditional loss, so an entropy gate approximates the oracle that selects the better expert per instance. The theorem below formalizes this intuition as an \textit{oracle inequality} with a measurable gating regret that vanishes when the entropy ordering matches the conditional-loss ordering. This yields a principled explanation of why dual-graph fusion can strictly outperform any single graph in expectation, while remaining faithful to each channel’s inductive bias.

\begin{definition}[Dual-graph posteriors and entropy-gated fusion]
Let $\mathcal{A}$ be a finite answer set and $(q,y)\sim \mathcal{D}$ be query–label pairs. 
Two layer-native predictors produce posteriors $P_{B}(\cdot\mid q)$ (Breadth graph) and $P_{D}(\cdot\mid q)$ (Depth graph). 
Define their Shannon entropies $H_{B}(q)=-\sum_{a\in\mathcal{A}}P_{B}(a\mid q)\log P_{B}(a\mid q)$ and $H_{D}(q)$ analogously, and per-example log-losses 
$\ell_{B}(q,y)=-\log P_{B}(y\mid q)$, $\ell_{D}(q,y)=-\log P_{D}(y\mid q)$.
The fused posterior is the geometric (log-linear) mixture
\[
P_{F}(a\mid q)\;\propto\;P_{B}(a\mid q)^{\,1-\alpha(H)}\,P_{D}(a\mid q)^{\,\alpha(H)},
\quad 
\alpha(H)\;=\;\frac{e^{-H_{D}(q)}}{e^{-H_{D}(q)}+e^{-H_{B}(q)}}\in[0,1],
\]
with population (expected) log-loss risk 
$\mathcal{R}(P)=\mathbb{E}_{(q,y)\sim\mathcal{D}}\big[-\log P(y\mid q)\big]$.
We say channel $i\in\{B,D\}$ is \textit{entropy–loss calibrated} if there exists a nondecreasing $\phi_i:\mathbb{R}_{+}\to\mathbb{R}_{+}$ such that 
$\mathbb{E}[\ell_i(q,y)\mid H_i(q)=h]=\phi_i(h)$ almost everywhere.
\end{definition}

\begin{theorem}[Generalization advantage of entropy-gated dual-graph fusion]
\label{thm:dual-graph}
Under the setting above, for any $(q,y)$ and any fixed $\alpha\in[0,1]$,
\[
-\log P_{F}(y\mid q)\;\le\;(1-\alpha)\,\ell_{B}(q,y)+\alpha\,\ell_{D}(q,y).
\]
Consequently, for the entropy gate $\alpha(H)$,
\begin{equation}
\label{eq:oracle-ineq}
\mathcal{R}(P_{F})
\;\le\;
\mathbb{E}\!\left[\min\!\big\{\mathbb{E}[\ell_{B}\mid H],\,\mathbb{E}[\ell_{D}\mid H]\big\}\right]
\;+\;\mathcal{E}_{\mathrm{gate}},
\qquad
\mathcal{E}_{\mathrm{gate}}
=\mathbb{E}\!\left[\left|\Delta(H)\right|\cdot\left|\alpha(H)-\alpha^\star(H)\right|\right],
\end{equation}
where $H=(H_{B},H_{D})$, $\Delta(H)=\mathbb{E}[\ell_{D}\mid H]-\mathbb{E}[\ell_{B}\mid H]$, and $\alpha^\star(H)=\mathbb{I}\{\Delta(H)<0\}$ is the oracle gate.
If both channels are entropy–loss calibrated and the \textit{sign-consistency} condition holds almost surely,
\[
\mathrm{sign}\big(\Delta(H)\big) \;=\; \mathrm{sign}\big(H_{D}-H_{B}\big),
\]
then $\alpha(H)=\alpha^\star(H)$ almost surely, $\mathcal{E}_{\mathrm{gate}}=0$, and
\begin{equation}
\label{eq:strict}
\mathcal{R}(P_{F})
\;\le\;
\mathbb{E}\!\left[\min\!\big\{\mathbb{E}[\ell_{B}\mid H],\,\mathbb{E}[\ell_{D}\mid H]\big\}\right]
\;\le\; \min\!\left\{\mathcal{R}(P_{B}),\,\mathcal{R}(P_{D})\right\}.
\end{equation}
Thus the entropy-gated dual-graph fusion generalizes at least as well as the better single graph and strictly better whenever the better channel varies across queries.
\end{theorem}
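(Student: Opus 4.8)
The plan is to prove the three claims in sequence, building from a pointwise bound to the population oracle inequality, and finally to the calibrated collapse onto the better channel. First I would establish the pointwise inequality $-\log P_{F}(y\mid q)\le(1-\alpha)\ell_{B}(q,y)+\alpha\,\ell_{D}(q,y)$. Writing the geometric mixture explicitly as $P_{F}(a\mid q)=P_{B}(a\mid q)^{1-\alpha}P_{D}(a\mid q)^{\alpha}/Z(q)$ with partition function $Z(q)=\sum_{a}P_{B}(a\mid q)^{1-\alpha}P_{D}(a\mid q)^{\alpha}$, taking negative logarithms gives the exact identity $-\log P_{F}(y\mid q)=(1-\alpha)\ell_{B}+\alpha\,\ell_{D}+\log Z(q)$. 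The claim then reduces to $Z(q)\le 1$, which is immediate from Hölder's inequality with conjugate exponents $1/(1-\alpha)$ and $1/\alpha$ applied to the probability vectors $P_{B}(\cdot\mid q)$ and $P_{D}(\cdot\mid q)$ (the endpoints $\alpha\in\{0,1\}$ being trivial). This is just log-convexity of the partition function and needs no further hypotheses.

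Next I would pass to the population risk and isolate the gating regret. Taking expectations in the pointwise bound with the measurable gate $\alpha(H)$ and conditioning on $H=(H_{B},H_{D})$, the gate pulls out of the inner expectation since it is $\sigma(H)$-measurable, yielding $\mathcal{R}(P_{F})\le\mathbb{E}\big[(1-\alpha(H))\bar\ell_{B}(H)+\alpha(H)\bar\ell_{D}(H)\big]$ with $\bar\ell_{i}(H)=\mathbb{E}[\ell_{i}\mid H]$. Because this convex combination is affine in $\alpha$, it is minimized over $[0,1]$ at the oracle endpoint $\alpha^{\star}(H)=\mathbb{I}\{\Delta(H)<0\}$, attaining $\min\{\bar\ell_{B},\bar\ell_{D}\}$; the exact decomposition $(1-\alpha)\bar\ell_{B}+\alpha\bar\ell_{D}=\min\{\bar\ell_{B},\bar\ell_{D}\}+(\alpha-\alpha^{\star})\Delta(H)$, together with the observation that the residual $(\alpha-\alpha^{\star})\Delta$ is always nonnegative and hence equals $|\alpha-\alpha^{\star}|\,|\Delta|$, gives the oracle inequality~\eqref{eq:oracle-ineq} after taking expectations.

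For the calibrated, sign-consistent case, the goal is to drive $\mathcal{E}_{\mathrm{gate}}$ to zero and then collapse the oracle risk onto the better single channel. Entropy–loss calibration renders each $\bar\ell_{i}$ a monotone function of its own entropy, so the sign of $\Delta(H)$ is controlled by the entropy gap, and the sign-consistency hypothesis forces the conditional-loss argmin to agree with the ordering of $H_{D}$ and $H_{B}$. Since $\alpha(H)>\tfrac12\iff H_{D}<H_{B}$, the gate lands on the same side as $\alpha^{\star}(H)$ wherever $\Delta(H)\neq 0$, so on $\{\Delta\neq 0\}$ it selects the correct expert and $\mathcal{E}_{\mathrm{gate}}$ vanishes. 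The final chain $\mathbb{E}\big[\min\{\bar\ell_{B},\bar\ell_{D}\}\big]\le\min\{\mathcal{R}(P_{B}),\mathcal{R}(P_{D})\}$ then follows from the pointwise bound $\min\{\bar\ell_{B},\bar\ell_{D}\}\le\bar\ell_{i}$ and the tower property $\mathcal{R}(P_{i})=\mathbb{E}[\bar\ell_{i}]$, with strictness precisely when the argmin flips on a set of positive measure, i.e.\ when the better channel varies across queries.

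I expect the main obstacle to be the step identifying the soft sigmoid gate with the hard oracle gate. The continuous gate $\alpha(H)\in(0,1)$ never literally equals $\alpha^{\star}(H)\in\{0,1\}$ unless one entropy degenerates, so the assertion $\mathcal{E}_{\mathrm{gate}}=0$ is delicate: it must be read either through a hardened gate $\mathbb{I}\{\alpha(H)>\tfrac12\}$ (equivalently a low-temperature limit of the log-linear mixture) or by strengthening sign-consistency to a margin condition that makes the residual $|\alpha(H)-\alpha^{\star}(H)|\,|\Delta(H)|$ integrate to zero. Pinning down the precise form of this hypothesis—and checking that calibration, stated for $\mathbb{E}[\ell_{i}\mid H_{i}]$, actually transfers the entropy order to the loss order under the joint conditioning on $(H_{B},H_{D})$ used in $\bar\ell_{i}(H)$—is where the argument needs the most care, and likely requires an auxiliary conditional-independence or monotone-coupling assumption to bridge the two conditionings.
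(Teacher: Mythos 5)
Your proposal follows essentially the same route as the paper's proof: H\"older's inequality to show the partition function $Z\le 1$ for the pointwise bound, conditioning on $H$ and the affine-in-$\alpha$ decomposition $(1-\alpha)\bar\ell_B+\alpha\bar\ell_D=\min\{\bar\ell_B,\bar\ell_D\}+(\alpha-\alpha^\star)\Delta$ for the oracle inequality, and calibration plus sign-consistency to kill the gating regret, followed by $\min\{\bar\ell_B,\bar\ell_D\}\le\bar\ell_i$ and the tower property for the final chain. Your Step 2 is in fact slightly cleaner than the paper's: you observe that $(\alpha-\alpha^\star)\Delta\ge 0$ pointwise and hence equals $|\alpha-\alpha^\star|\,|\Delta|$ exactly, whereas the paper reaches the same expression via $|\mathbb{E}[X]|\le\mathbb{E}[|X|]$. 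You also correctly avoid the paper's mislabeling of the last inequality as ``Jensen for the convex function $\min$'' ($\min$ is concave; the inequality is just monotonicity of expectation applied to $\min\{X,Y\}\le X$ and $\min\{X,Y\}\le Y$).

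The two obstacles you flag in your final paragraph are genuine, and it is worth saying plainly that the paper's own proof does not resolve them --- it simply asserts the problematic steps. First, the soft gate $\alpha(H)=e^{-H_D}/(e^{-H_D}+e^{-H_B})$ takes values strictly in $(0,1)$ whenever both entropies are finite, so it can never literally equal the hard oracle gate $\alpha^\star(H)\in\{0,1\}$; the paper's claim that ``$\alpha(H)=\alpha^\star(H)$ almost surely'' is false as written, and $\mathcal{E}_{\mathrm{gate}}=0$ does not follow. What sign-consistency actually buys is that $\alpha(H)$ and $\alpha^\star(H)$ lie on the same side of $\tfrac12$, which bounds $|\alpha(H)-\alpha^\star(H)|$ by $\tfrac12$ rather than by $0$; to get a vanishing regret one needs exactly the kind of repair you suggest (a hardened or temperature-scaled gate, or a margin condition on $|H_D-H_B|$ where $|\Delta|$ is bounded away from zero). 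Second, the conditioning mismatch is also real: calibration is stated for $\mathbb{E}[\ell_i\mid H_i]$, but $\Delta(H)$ involves $\mathbb{E}[\ell_i\mid H_B,H_D]$, and the paper's identification $\Delta(H)=\phi_D(H_D)-\phi_B(H_B)$ silently assumes each loss is conditionally independent of the other channel's entropy given its own. Your instinct that an auxiliary conditional-independence or monotone-coupling assumption is needed is correct. So your proposal is sound where the paper is sound, and honest where the paper is not.
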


\begin{proof}
\textbf{Step 1: Pointwise bound for geometric mixtures.}
For any $\alpha\in[0,1]$, the fused posterior can be written as
$P_{F}(y\mid q)=\frac{P_{B}(y\mid q)^{1-\alpha}P_{D}(y\mid q)^{\alpha}}{Z}$ with 
$Z=\sum_{a}P_{B}(a\mid q)^{1-\alpha}P_{D}(a\mid q)^{\alpha}$.
By Hölder’s inequality (generalized AM–GM),
$\sum_{a} x_a^{\,1-\alpha}y_a^{\,\alpha}\le(\sum_a x_a)^{1-\alpha}(\sum_a y_a)^{\alpha}$ for $x_a,y_a\ge0$.
Taking $x_a=P_{B}(a\mid q)$, $y_a=P_{D}(a\mid q)$ (each sums to $1$), we have $Z\le 1$ and hence $\log Z\le 0$.
Therefore, we have
\[
-\log P_{F}(y\mid q)
= -(1-\alpha)\log P_{B}(y\mid q) - \alpha\log P_{D}(y\mid q) + \log Z
\le (1-\alpha)\ell_{B}(q,y) + \alpha\ell_{D}(q,y).
\]
\textbf{Step 2: Expectation and oracle decomposition.}
Taking expectations and letting the gate be data-dependent $\alpha(H)$, we have
\[
\mathcal{R}(P_{F})
\le \mathbb{E}\!\left[\ell_{B}\right] + \mathbb{E}\!\left[\alpha(H)\cdot(\ell_{D}-\ell_{B})\right]
= \mathbb{E}\!\left[\mathbb{E}[\ell_{B}\mid H]+\alpha(H)\big(\mathbb{E}[\ell_{D}\mid H]-\mathbb{E}[\ell_{B}\mid H]\big)\right].
\]
Let $\Delta(H)=\mathbb{E}[\ell_{D}\mid H]-\mathbb{E}[\ell_{B}\mid H]$ and define the oracle gate $\alpha^\star(H)=\mathbb{I}\{\Delta(H)<0\}$.
Then
\[
\mathbb{E}[\ell_{B}\mid H]+\alpha(H)\Delta(H)
= \min\{\mathbb{E}[\ell_{B}\mid H],\mathbb{E}[\ell_{D}\mid H]\}
+ (\alpha(H)-\alpha^\star(H))\Delta(H).
\]
Taking outer expectations and applying $|\,\mathbb{E}[X]\;|\le \mathbb{E}[|X|]$ yields \eqref{eq:oracle-ineq}.

\textbf{Step 3: Calibration implies sign-consistent gating.}
If channels are entropy–loss calibrated, there exist nondecreasing $\phi_B,\phi_D$ with 
$\mathbb{E}[\ell_i\mid H_i=h]=\phi_i(h)$. Hence
$\Delta(H)=\phi_D(H_D)-\phi_B(H_B)$ and 
$\mathrm{sign}(\Delta(H))=\mathrm{sign}(H_D-H_B)$ whenever $\phi_i$ are strictly increasing. 
Our entropy gate $\alpha(H)=\frac{e^{-H_{D}}}{e^{-H_{D}}+e^{-H_{B}}}$ is a strictly increasing function of $-(H_D-H_B)$, thus 
$\alpha(H)=\alpha^\star(H)$ almost surely under the sign-consistency condition, making $\mathcal{E}_{\mathrm{gate}}=0$.
Finally, Jensen’s inequality for the convex function $\min$ gives
$\mathbb{E}[\min\{\mathbb{E}[\ell_{B}\mid H],\mathbb{E}[\ell_{D}\mid H]\}]\le \min\{\mathbb{E}[\ell_{B}],\mathbb{E}[\ell_{D}]\}$. In this way, we have completed the proof.
\end{proof}

\section{More Results and Details for Experiments}
\label{sec7}
In this section, we present the performance improvements achieved by DualResearch when reusing X-Masters logs. We then demonstrate how we configure prompts to enable the LLM to clean logs, as well as evaluate on the HLE and GPQA datasets.

\subsection{Improvement in X-Masters}
\label{sec7.1}
\begin{table}[h]
\caption{Compared with the baseline X-Masters, DualResearch shows improvements on HLE Text-Only in the Bio/Med domain. Here, X-Masters refers to the originally reported results, while X-Masters\* denotes the results we obtained from our reproduction. All frameworks employ DeepSeek-R1-671B as the backbone model.}\label{tbl7.1}
\centering
\setlength\tabcolsep{2.5pt}
{\fontsize{9pt}{12pt}\selectfont
\begin{tabular}{c|cccc}
\hline
Methods         & X-Masters & X-Masters* & DualResearch & Improvement \\ \hline
Acc. in Bio/Med & 27.6      & 23.9       & 28.8         & \cellcolor{mycolor!10}{\textcolor{red!70!black}{\textuparrow \textbf{4.9}}}         \\ \hline
\end{tabular}}
\end{table}
As shown in Table \ref{tbl7.1}, it is evident that after reusing the scientific logs generated by X-Masters during problem solving, DualResearch demonstrates significant improvements. The accuracy increased from 23.9\% to 28.8\%, yielding a gain of 4.9\%. This finding is consistent with the conclusions reported on InternAgent in the main text, providing strong evidence that DualResearch, as a post-processing method for deep research, can deliver stable performance gains.

\begin{figure}[h]
\vspace{-12pt}
    \centering
    \includegraphics[width=0.7\linewidth]{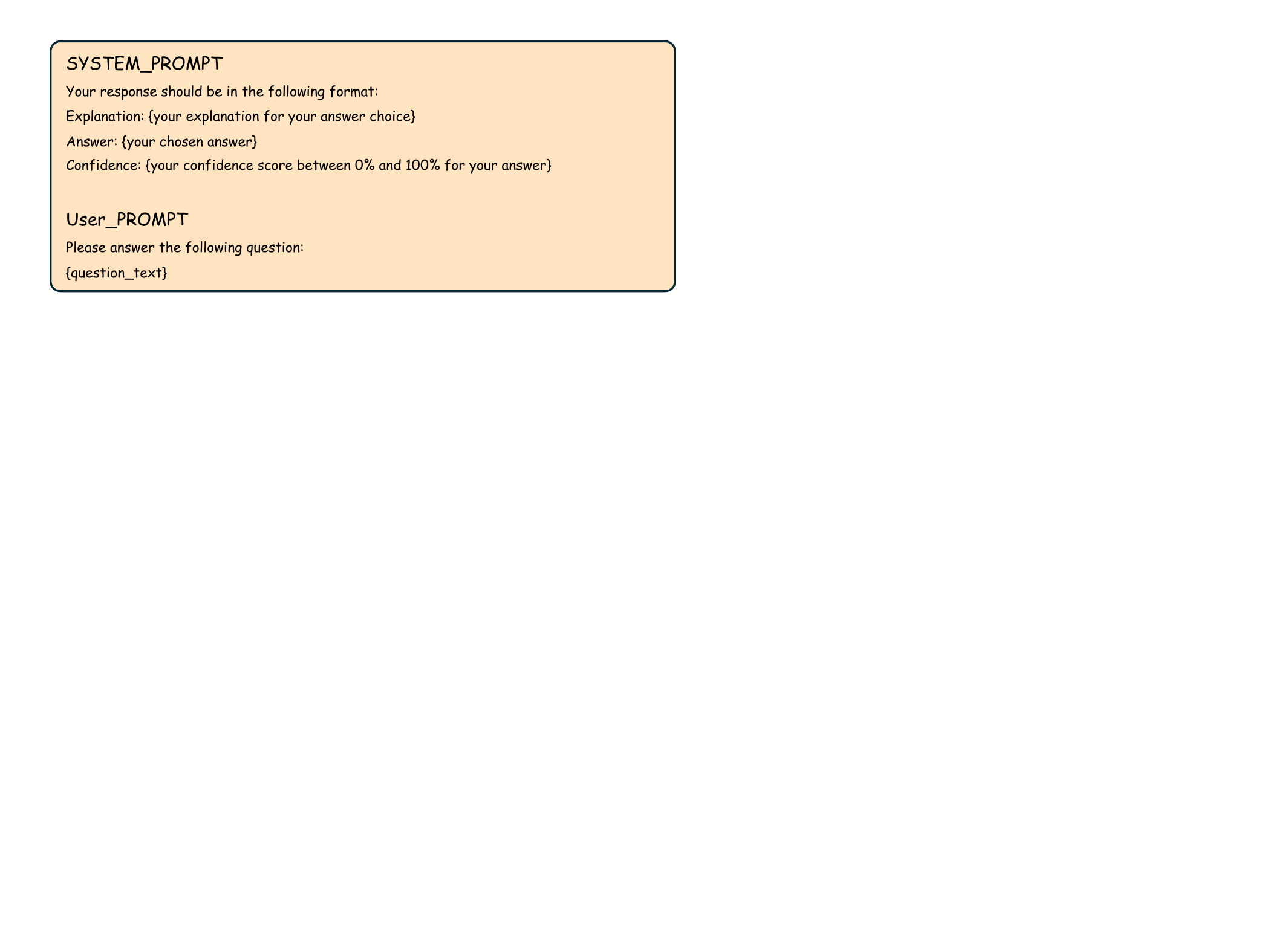}
    \caption{The prompt used to call LLM to answer scientific questions in HLE and GPQA.}
    \label{fig:7.2.1}
\end{figure}

\subsection{Prompt for test LLMs in HLE and GPQA}
\label{sec7.2}
\begin{figure}[t]
    \centering
    \includegraphics[width=1.0\linewidth]{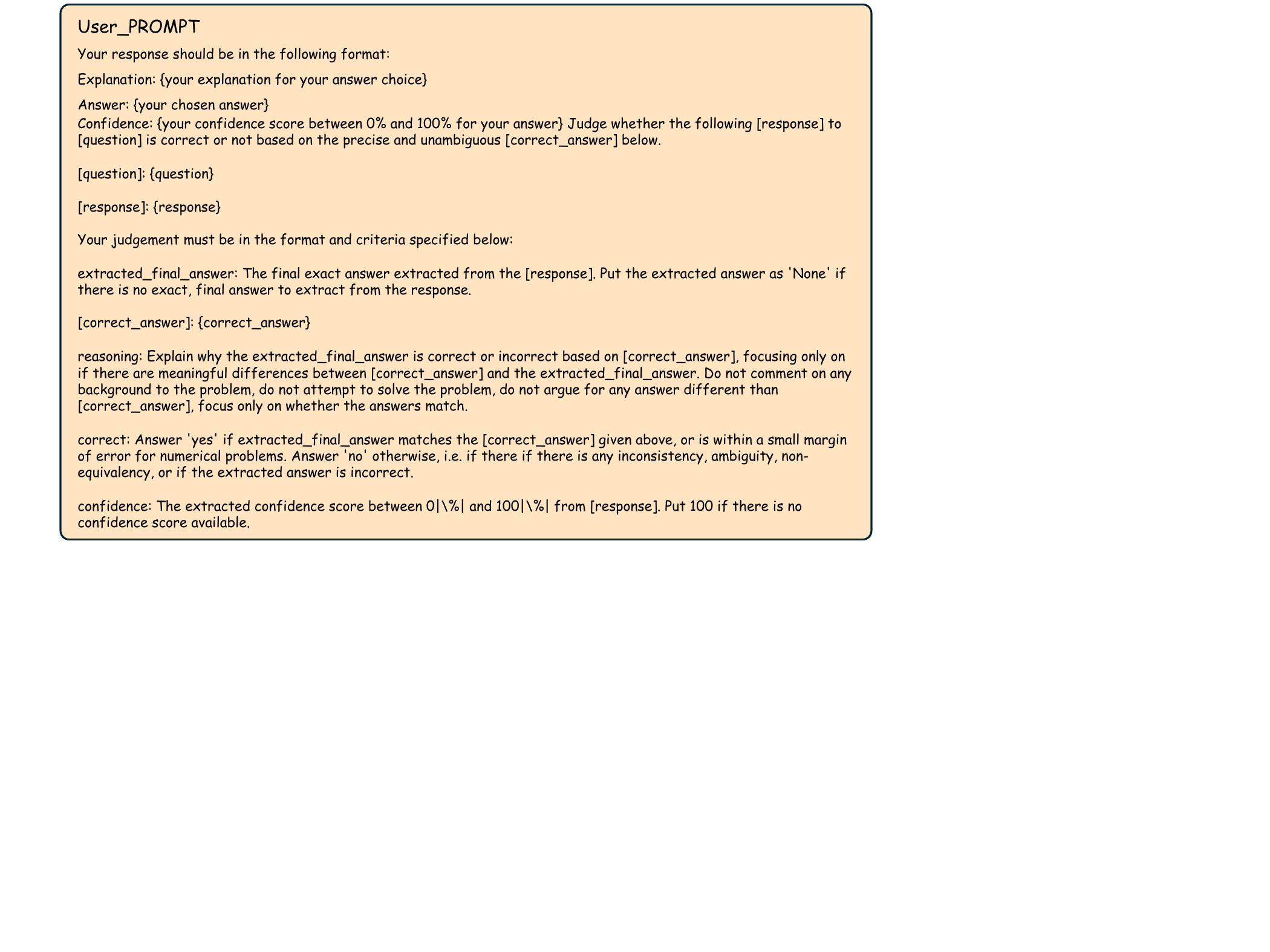}
    \caption{The prompts used to invoke LLMs for evaluating response content all use o3-mini as the judge model in this article.}
    \label{fig:7.2.2}
    \vspace{-10pt}
\end{figure}

\section{The Use of Large Language Models (LLMs)}
In this study, large language models (LLMs) were used only in a limited scope to assist with peripheral tasks. For code development, we occasionally employed Claude-4 to generate boilerplate functions for data preprocessing and visualization, which were subsequently reviewed and adjusted by the authors. For manuscript preparation, GPT-5 was used to provide minor linguistic suggestions and stylistic improvements, while all conceptual content, methodological design, and experimental analyses were entirely conducted by the authors.

\end{document}